\declaretheorem[name=Theorem,refname=Thm.]{theorem}
\declaretheorem[name=Lemma,sibling=theorem]{lemma}
\declaretheorem[name=Proposition,refname=Prop.,sibling=theorem]{proposition}
\declaretheorem[name=Remark]{remark}
\definecolor{dgreen}{rgb}{0.00,0.49,0.00}
\definecolor{dblue}{rgb}{0,0.08,0.75}
\crefname{assumption}{Assumption}{Assumptions}
\crefname{equation}{}{}
\crefname{figure}{Fig.}{Figs.}
\crefname{table}{Tab.}{Tabs.}
\crefname{section}{Sec.}{Sec.}
\crefname{theorem}{Thm.}{Thm.}
\crefname{lemma}{Lemma}{Lemmas}
\crefname{corollary}{Cor.}{Cor.}
\crefname{example}{Example}{Examples}
\crefname{remark}{Remark}{Remarks}
\crefname{algorithm}{Alg.}{Algorightms}
\crefname{appendix}{Appendix}{Appendices}
\crefname{subappendix}{Appendix}{Appendices}
\crefname{subsubappendix}{Appendix}{Appendices}
\newcommand{\paperTitle}{The Role of Global Labels in Few-Shot Classification and How to Infer Them}
\newcommand{\fsl}{FSL}
\newcommand{\laml}{MeLa}
\newcommand{\imgnet}{\textsc{ImageNet}}
\newcommand{\mimg}{\textit{mini}\imgnet{}}
\newcommand{\timg}{\textit{tiered}\imgnet{}}
\newcommand{\queryn}{n_q}
\newcommand{\supportn}{n_s}
\newcommand{\mbf}[1]{\mathbf{#1}}
\newcommand{\R}{{\mathbb{R}}}
\newcommand{\EE}{\mathbb{E}}
\newcommand{\Lagr}{\mathcal{L}}
\newcommand{\argmin}{\operatornamewithlimits{argmin}}
\newcommand{\X}{{\mathcal{X}}}
\newcommand{\Y}{{\mathcal{Y}}}
\newcommand{\F}{{\mathcal{F}}}
\newcommand{\D}{{\mathcal{D}}}
\newcommand{\alg}{\textrm{Alg}}
\newcommand{\Q}{\mathcal{Q}}
\renewcommand{\paragraph}[1]{\noindent{\bfseries #1.}}
\newcommand{\eqals}[1]{\begin{align*}#1\end{align*}}
\newcommand{\eqal}[1]{\begin{align}#1\end{align}}
\newcommand{\dtr}{S}
\newcommand{\dval}{Q}
\newcommand{\dm}{D_{\rm global}}
\newcommand{\Tau}{\mathcal{T}}
\newcommand{\embd}{\psi_{\theta}}
\providecommand{\nor}[1]{\left\|{#1}\right\|}
\newcommand{\ridge}{w}
\title{\LARGE\bf\paperTitle{}\vspace{1em}}
\date{}
\author{ Ruohan Wang$^{1,2}$ \\ {\footnotesize\em wang\_ruohan@i2r.a-star.edu.sg} \and  Massimiliano Pontil$^{1,3}$ \\ {\footnotesize\em massimiliano.pontil@iit.it} \\ \and  Carlo Ciliberto$^{1}$ \\ {\footnotesize\em c.ciliberto@ucl.ac.uk} \\ $ $ \\  }
\begin{document}

\maketitle

\begin{abstract}
\footnotetext[1]{Center for AI, Department of Computer Science, University College London, London, UK.} \footnotetext[2]{Institute of Infocomm Research, A*STAR, Singapore} \footnotetext[3]{Computational Statistics and Machine Learning, Istituto Italiano di Tecnologia, Genova, Italy} \noindent Few-shot learning is a central problem in meta-learning, where learners must quickly adapt to new tasks given limited training data.
Recently, feature pre-training has become a ubiquitous component in state-of-the-art meta-learning methods and is shown to provide significant performance improvement.
However, there is limited theoretical understanding of the connection between pre-training and meta-learning.
Further, pre-training requires {\it global labels} shared across tasks, which may be unavailable in practice.
In this paper, we show why exploiting pre-training is theoretically advantageous for meta-learning, and in particular the critical role of global labels.
This motivates us to propose {\bfseries Me}ta {\bfseries La}bel Learning (\laml{}), a novel meta-learning framework that automatically infers global labels to obtains robust few-shot models.
Empirically, we demonstrate that \laml{} is competitive with existing methods and provide extensive ablation experiments to highlight its key properties.
\end{abstract}

\section{Introduction}
\label{sec:intro}
A central problem in meta-learning is {\it few-shot learning} (\fsl{}), where new tasks must be learned quickly given limited amount of training data. \fsl{} has drawn increasing attention recently due to the high cost of collecting and annotating large datasets. To tackle the challenge of model generalization in \fsl{}, meta-learning leverages past experiences of solving related tasks by directly learning transferable knowledge over a collection of \fsl{} tasks. A diverse range of meta-learning methods tailored for \fsl{} have been proposed, including optimization-based~\citep[e.g.][]{finn2017model, bertinetto2018meta, wertheimer2021few}, metric learning~\citep[e.g.][]{vinyals2016matching, snell2017prototypical, sung2018learning}, and model-based methods ~\citep[e.g.][]{ha2016hypernetworks, rusu2018meta, qi2018low}. The diversity of the existing strategies raises a natural question: do these methods share any common lessons for improving model generalization and for designing future methods?

Several papers addressed the above question. Chen \textit{et al.} \cite{chen2018closer} identified that data augmentation and deeper network architecture significantly improve generalization performance across several meta-learning methods.
On the other hand, Tian \textit{et al.} \cite{tian2020rethinking} investigated a simple yet competitive approach: a linear model on top of input embeddings learned via feature pre-training. This approach ignores task structures from meta-learning and merges all tasks into a ``flat'' dataset of labeled samples. The desired input embedding is then learned by classifying all classes of the flat dataset.

Extensive empirical evidences supports the efficacy of feature pre-training in \fsl{}. Pre-training alone already outperforms various meta-learning algorithms~\cite{tian2020rethinking}. Recently, it is used as a ubiquitous pre-processing step in state-of-the-art meta-learning methods~\citep[e.g.][]{wang2020structured, rodriguez2020embedding, wertheimer2021few}. In particular, \cite{wertheimer2021few} reported that pre-training also significantly boosted earlier methods (see \cref{fig:pretrain_proto_match}).

Despite the significant empirical improvement, there is limited theoretical understanding of why feature pre-training provides significant performance gain for meta-learning. On the other hand, pre-training requires task merging to construct the flat dataset, which implicitly assumes access to some notion of {\it global labels} consistently adopted across tasks. However, global labels may not exist or are inaccessible, such as when each task is annotated independently with only \textit{local labels}. This renders direct task merging and consequently pre-training impossible (see \cref{fig:global_local_label}). Independent annotation captures realistic scenarios when tasks are collected \textit{organically} (e.g from different users), rather than generated {\itshape synthetically} from benchmark datasets (e.g. \mimg{}). Possible scenarios include non-descript task labels (e.g. tasks with numerical labels) or even concept overlaps among labels across tasks (e.g. sea animals vs mammals).

\begin{figure}[t]
    \centering
    \centering
    \subfloat[\textbf{Global vs. Local labels}]{
    \includegraphics[width=0.27\textwidth]{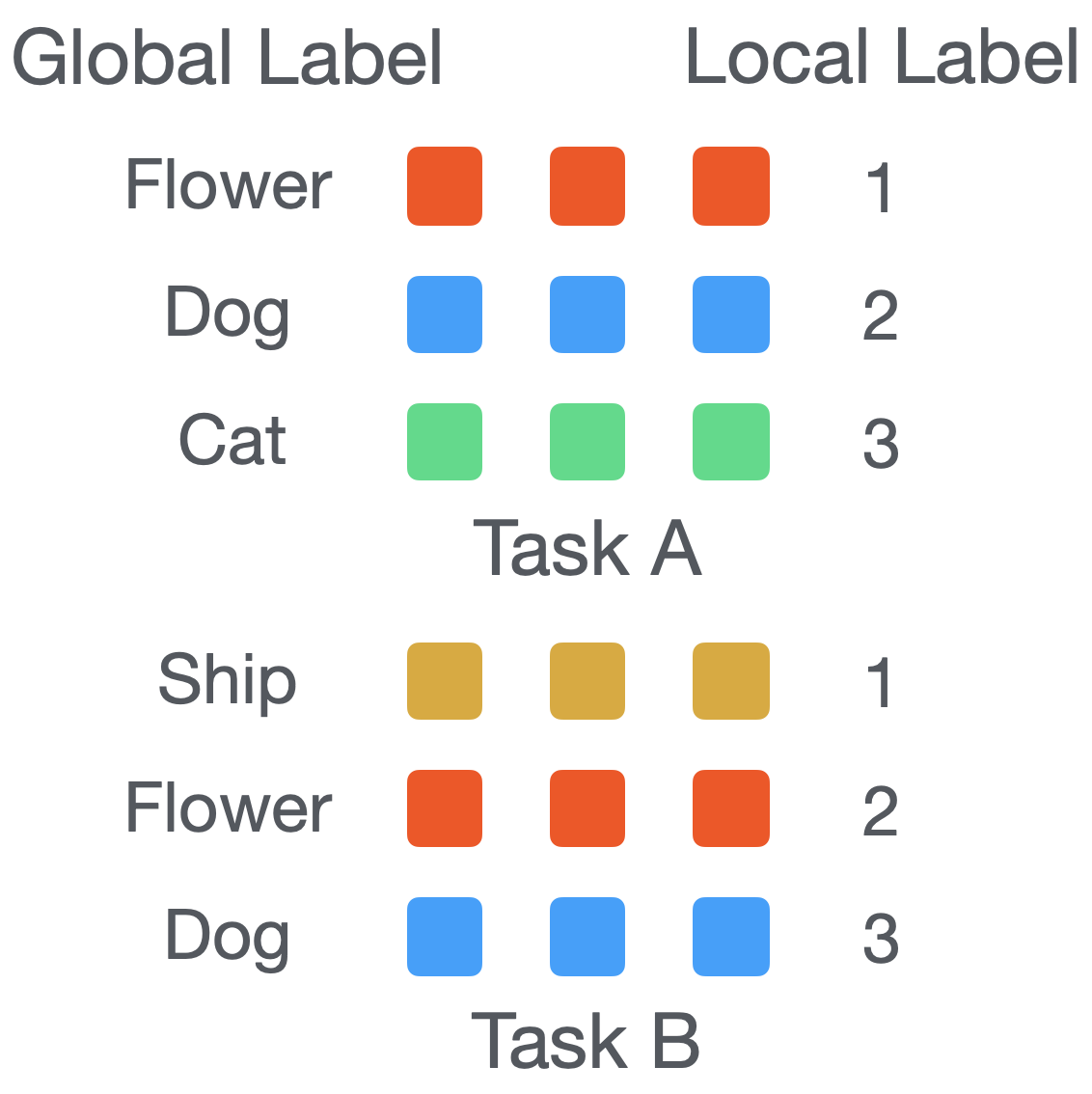}\label{fig:global_local_label}}
    \hfill
    \subfloat[\textbf{Global to Local Classifier}]{
    \includegraphics[width=0.7\textwidth]{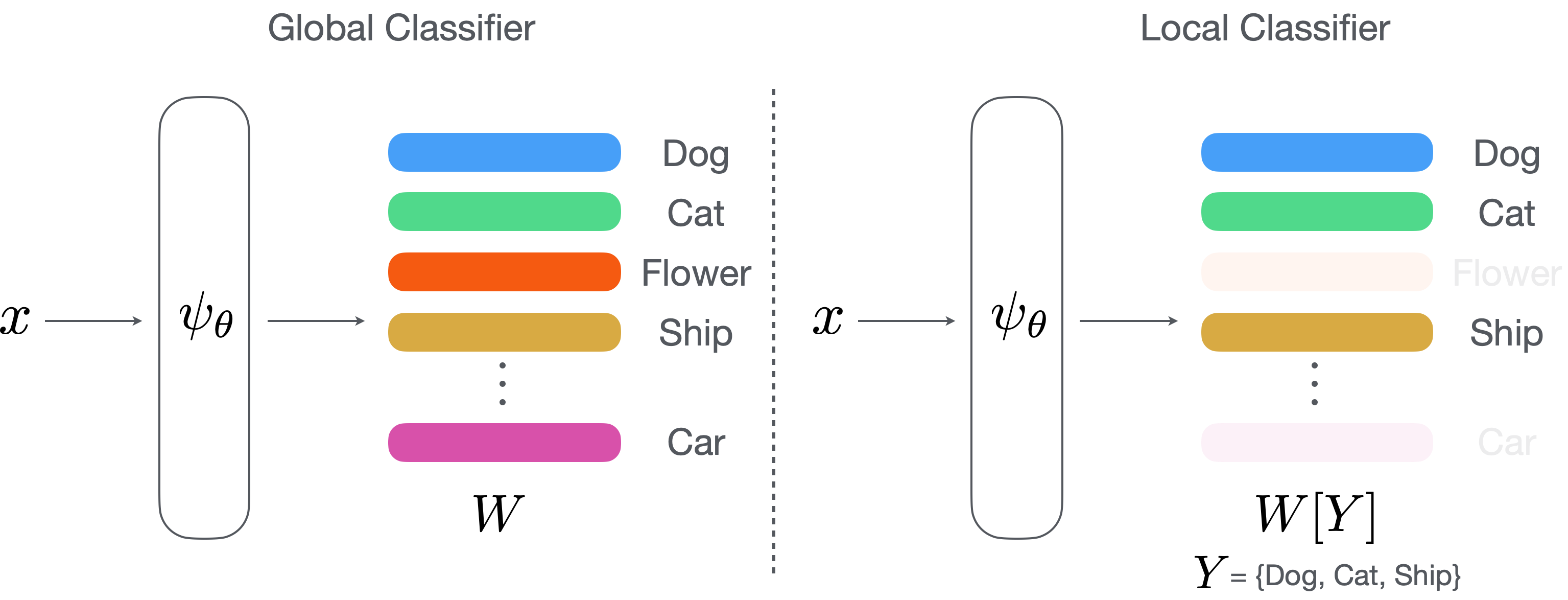}\label{fig:global_local_model}}
    \caption{{\bfseries (a)} Colored squares represent samples. Tasks A and B can be ``merged'' meaningfully using global labels, but not local ones. {\bfseries (b)} A global classifier can be used as local classifiers given the indices $Y$ of the intended classes to predict.}
\end{figure}

In this paper, we address the issues raised above with the following contributions:
\begin{itemize}
    \item In \cref{sec:thm}, we show that feature pre-training directly relates to meta-learning as a loss upper bound. In particular, pre-training induces a conditional meta-learning formulation~\citep{denevi2020advantage, wang2020structured}, which provides a principled explanation for the empirical performance gains. \item In \cref{sec:label_algo}, we propose {\bfseries Me}ta {\bfseries La}bel Learning (\laml{}), a novel framework that automatically infers some notion of \textit{latent} global labels consistent with local task constraints. The inferred labels enable us to leverage pre-training to improve meta-learners' generalization performance, and bridges the gap between experiment settings with or without acesss to global labels.
    \item In \cref{sec:exp}, we demonstrate empirically the competitive performance of \laml{} over a suite of benchmark datasets. We also present ablation studies to highlight its key properties.
\end{itemize}

In \cref{sec:bg}, we first review the key notions of meta-learning and \fsl{}. The supplementary material contains proofs for the theoretical analysis, additional empirical results, and experiment setup.

\section{Background}
\label{sec:bg}
We formalize \fsl{} in the context of meta-learning, followed by reviewing feature pre-training initially investigated in \cite{tian2020rethinking}.

\paragraph{Meta-learning}
\fsl{}~\citep{fei2006one} considers a meta-training set of tasks $\Tau=\{(\dtr_t, \dval_t)\}_{t=1}^{T}$, with {\em support set} $\dtr_t=\{(x_{j}, y_{j})\}_{j=1}^{\supportn}$ and {\em query set} $\dval_t=\{(x_j, y_j)\}_{j=1}^{\queryn}$ sampled from the same distribution. Typically, $\dtr_t$ and $\dval_t$ each contains a small number of samples $\supportn$ and $\queryn$ respectively (fixed for all tasks for simplicity). We denote by $\D$ the space of datasets of the form $S_t$ or $Q_t$.

\fsl{} aims to find the best inner learning algorithm (referred to as {\itshape base learner} in the following) $\alg(\theta,\cdot):\D
\to\mathcal{F}$ that maps supports sets $S$ to predictors $f = \alg(\theta,S)$, such that $x \mapsto f(x)$ generalizes well on the corresponding query sets. The base learner is meta-parametrized by $\theta\in\Theta$. Formally, the meta-learning problem for \fsl{} is
\eqal{\label{eq:meta-learning-risk}
\min_{\theta\in\Theta}~~ \EE_{(\dtr, \dval)\in\Tau} ~\Lagr\big(\alg(\theta,\dtr),~\dval\big),
}
where $\EE_{(\dtr, \dval)\in\Tau}\triangleq \frac{1}{|\Tau|}\sum_{(S,Q)\in\Tau}$ denotes the empirical distribution over the meta-training set. The task loss $\Lagr:\F\times\D\to\R$ is the empirical risk of the learner over query sets, according to an inner loss $\ell: \Y\times\Y\to\R$, where $\Y$ is the space of labels
\eqal{\label{eq:meta-loss}
    \Lagr(f,D) ~=~ \EE_{(x, y)\in D}~[ \ell(f(x),y)].
}


Designing effective base learners is a key focus in meta-learning literature and various strategies have been explored. We focus on a broad class of methods that we call meta-representation learning~\citep{raghu2019rapid, lee2019meta, bertinetto2018meta,franceschi2018bilevel}, which is remarkably effective in practice and closely related to feature pre-training. Meta-representation learning infers a suitable embedding for base learners by solving
\eqal{\label{eq:meta-representation-model}
   \min_{\theta\in\Theta} ~~ \EE_{(\dtr, \dval)\in\Tau} \left[\mathcal{L}(\ridge(\embd(\dtr)), \embd(\dval)\right]
}
where $\embd:\X\to\R^m$ is a feature embedding model and $\embd(D)\triangleq \{(\embd(x), y) | (x, y) \in D\}$ the embedded dataset. In \cite{bertinetto2018meta}, the ridge regression estimator is chosen as the base learner $\alg(\theta, D) = \ridge(\embd(D))$
\eqal{\label{eq:closed-form-solver}
    \ridge(\embd(D)) ~=~ \argmin_{W}~~\EE_{(x,y)\in \embd(D)}~~\nor{Wx - \textrm{OneHot}(y)}^2 + \lambda_1\nor{W}^2,
}
where $\lambda_1$ is a constant and $\textrm{OneHot}(y)$ denotes the one-hot encoding of the label $y$. Among different base learner designs, ridge regression is often favored for its differentiable closed-form solution and the associated computational efficiency in optimizing \cref{eq:meta-representation-model}.


 
\paragraph{Feature Pre-training}
Feature pre-training has been widely used in meta-learning. It was investigated in-depth in \cite{tian2020rethinking}. Given the meta-training set $\Tau$, a ``flat'' dataset $\dm$ is constructed by merging all tasks in $\Tau$:
\eqal{\label{eq:merge_set}
    \dm = D(\Tau) =\{(x_i, y_i)\}_{i=1}^N=\bigcup\limits_{(\dtr, \dval) \in \Tau} (\dtr\cup\dval).
}
%
We then learn a embedding function $\embd$ on $\dm$ using the standard cross-entropy loss $\ell_{\rm ce}$ for multi-class classification:
\eqal{
\label{eq:std_classify}
    \argmin_{\theta, W}\EE_{(x, y)\in \dm}~[\ell_{\rm ce}(W\embd(x), y)].
}
After obtaining $\embd$, a novel task may be solved in the embedded space by a base learner. In \cite{tian2020rethinking}, the authors recommended the regularized logistic regression estimator
\eqal{
\label{eq:log_reg}
    \ridge_{\rm ce}(D) = \argmin_{W}~\Lagr_{\rm ce}(W, D) + \lambda_2\nor{W}^2,
}
for its empirical performance. Here $\lambda_2$ is a regularization constant.

\cite{tian2020rethinking} demonstrated that feature pre-training as a standalone method already outperforms many sophisticated meta-learning strategies. It has also become a ubiquitous component in most state-of-the-art methods~\citep{ye2020few, wertheimer2021few, zhang2020deepemd}. Furthermore, \cite{wertheimer2021few} demonstrated empirically that pre-training provides similar performance gain to earlier methods, making them (mostly) competitive with the state of the art. In \cref{sec:thm}, we show that feature pre-training directly relates to meta-learning as a loss upper bound, and explains why pre-training contributes to improved performance.


\section{Feature Pre-training as Meta-learning}
\label{sec:thm} 
In this section, we show how feature pre-training relates to meta-learning as a loss upper bound. More precisely, we show that the pre-training induces a special base learner with the corresponding meta-learning loss upper bounded by the cross-entropy loss (Eq. \cref{eq:std_classify}). Consequently, minimizing the cross-entropy loss also indirectly solves the induced meta-learning problem. Further, we observe that the special base learner is a conditional meta-learning formulation, which provides a principled explanation for the improved performance.

Let $\Tau =\{(S_t, Q_t)\}_{t=1}^T$ be a meta-training set. We denote the collection of query sets as $\Q=\{Q_t\}_{t=1}^T = \{(X_t, Y_t)\}_{t=1}^T$ where we write $Q_t=(X_t, Y_t)$ as a tuple of input samples $X_t=\{x_{jt}\}_{j=1}^{\queryn}$ and their corresponding labels $Y_t=\{y_{jt}\}_{j=1}^{\queryn}$. For simplicity, we assume that the query sets are disjoint, namely $Q_t\cap Q_{t'} = \emptyset$ for any $t\ne t'$. We merge all query sets into a flat dataset $D(\Q)=\{(x_i, y_i)\}_{i=1}^N=\cup_{t=1}^T Q_t$, with $N = \queryn T$.


\begin{proposition}\label{thm:ce_conn} 
With the notation and assumptions introduced above, let $C$ be the total number of classes in $D(\Q)$, and $W\in \R^{C \times m}$ the global classifier. Denote by $W[Y]$ the sub-matrix with rows indexed by the sorted unique values\footnote{E.g. $Y=\{6, 6, 4, 9\}$ maps to $\{4, 6, 9\}$-th rows of $W$. Also see \cref{fig:global_local_model} for an illustration.} from $Y$. Then, for any embedding $\embd:\X\to\R^m$
\eqal{\label{eq:ce_bound}
    \EE_{(X, Y) \in \Q} \Big[\mathcal{L}_{\rm ce}\big(W[Y], (\embd(X), Y)\big)\Big] \leq ~\EE_{(x, y)\in D(\Q)}\left[\ell_{\rm ce}(W\embd(x), y)\right].
}
\end{proposition}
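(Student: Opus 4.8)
The plan is to reduce the stated inequality, which is an average over tasks on the left and an average over all pooled samples on the right, to a single \emph{pointwise} comparison at the level of each individual query sample, and then to sum. The saving feature is that the local classifier $W[Y]$ is nothing but $W$ with some rows deleted, and deleting rows can only shrink the softmax partition function while leaving the true-class logit untouched.

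First I would unfold both sides onto a common index set. Using the definition of $\Lagr$ in \cref{eq:meta-loss} together with the empirical task distribution $\EE_{(X,Y)\in\Q}$, the left-hand side equals
\[
\frac{1}{T}\sum_{t=1}^T \frac{1}{\queryn}\sum_{j=1}^{\queryn} \ell_{\rm ce}\big(W[Y_t]\embd(x_{jt}),\, y_{jt}\big).
\]
Since the query sets are disjoint and $N=\queryn T$, the pooled dataset $D(\Q)=\cup_t Q_t$ lets me rewrite the right-hand side as
\[
\frac{1}{N}\sum_{i=1}^N \ell_{\rm ce}(W\embd(x_i), y_i) \;=\; \frac{1}{T\queryn}\sum_{t=1}^T \sum_{j=1}^{\queryn} \ell_{\rm ce}\big(W\embd(x_{jt}),\, y_{jt}\big).
\]
Both sides are therefore averages over the same samples $(x_{jt},y_{jt})$ with identical weights $\tfrac{1}{T\queryn}$, so it suffices to establish $\ell_{\rm ce}(W[Y_t]\embd(x_{jt}),y_{jt}) \le \ell_{\rm ce}(W\embd(x_{jt}),y_{jt})$ for every $(t,j)$ and sum.

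For the pointwise bound I would write out the cross-entropy explicitly. Fix a sample $(x,y)$ from a task with label set $Y$, put $z=W\embd(x)\in\R^C$ for the global logits, and let $\mathcal{C}_Y\subseteq\{1,\dots,C\}$ be the sorted unique classes of $Y$ — exactly the rows retained in $W[Y]$. Because $y\in\mathcal{C}_Y$, its row $W_y$ is kept, so the true-class logit $z_y=\scal{W_y}{\embd(x)}$ is the same for both classifiers, and
\[
\ell_{\rm ce}(W\embd(x),y) = -z_y + \log\!\sum_{c=1}^C \exp(z_c), \qquad \ell_{\rm ce}(W[Y]\embd(x),y) = -z_y + \log\!\sum_{c\in\mathcal{C}_Y}\exp(z_c).
\]
The two expressions share the leading term $-z_y$, and since each $\exp(z_c)>0$ and $\mathcal{C}_Y\subseteq\{1,\dots,C\}$ we have $\sum_{c\in\mathcal{C}_Y}\exp(z_c)\le\sum_{c=1}^C\exp(z_c)$; monotonicity of $\log$ then gives the pointwise inequality, and summing over all $(t,j)$ with weight $\tfrac{1}{T\queryn}$ yields \cref{eq:ce_bound}. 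The only step requiring genuine care is the label re-indexing hidden in the notation $W[Y]$: one must verify that the restriction removes only rows indexed by classes \emph{absent} from the task, never the row of the observed label $y$, so that the numerator $z_y$ is genuinely preserved and the dropped denominator terms are nonnegative. This is precisely what the definition of $\mathcal{C}_Y$ as the classes appearing in $Y$ guarantees, so once that bookkeeping is pinned down the argument is immediate.
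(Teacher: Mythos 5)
Your proposal is correct and follows essentially the same route as the paper's proof: both rewrite the pooled average over $D(\Q)$ as a per-task average with identical weights $\tfrac{1}{T\queryn}$ (using $N=\queryn T$ and disjointness of the query sets), and both obtain the inequality from the observation that restricting the softmax partition function to the classes present in the task can only shrink the denominator while the true-class logit is unchanged. Your version merely states this as an explicit pointwise bound per sample, which is a slightly more careful packaging of the identical argument.
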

We outline the proof strategy here and defer the details to the supplementary material. We first observe that $D(\Q)$ has the same collection of samples as $\Q$. Secondly, each matrix $W[Y]$ is in fact a task classifier for the query set $(X, Y)$. Crucially, the likelihood of query samples $p(Y|X)$, estimated by task classifiers $W[Y]$, are no smaller than their likelihood estimated by the global classifier $W$, which forms the inequality in \cref{eq:ce_bound}. Combining the two observations yields \cref{thm:ce_conn}.

The key implication of \cref{thm:ce_conn} is that, {\itshape if global labels are available}, we can use a special base learner
\eqal{
\label{eq:base_global}
    \ridge_{\rm global}(D) = \ridge_{\rm global}(X, Y) = W[Y],
}
for any dataset of the form $D = (X,Y)$. This base learner only depends on the intended classes (i.e. unique values in $Y$) to produce task classifiers and is independent of specific inputs from $D$ (illustrated in \cref{fig:global_local_model}). We note that for any task $(\dtr,\dval)\in\Tau$, $\ridge_{\rm global}(\dtr) = \ridge_{\rm global}(\dval)$, since the support and query sets share the same class labels. These observations directly imply the following, 



\begin{lemma}
\label{thm:worse_than_optimal}
For meta-training set $\Tau$ where tasks are annotated with global labels, we have
\eqal{
\label{eq:global_meta_loss}
    \EE_{(S, Q) \in \Tau} \Big[\mathcal{L}_{\rm ce}\big(\ridge_{\rm global}(S), \embd(Q)\big)\Big] = \EE_{(X, Y) \in \Q} \Big[\mathcal{L}_{\rm ce}\big(W[Y], (\embd(X), Y)\big)\Big].
}
\end{lemma}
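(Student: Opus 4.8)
The plan is to prove the identity by a direct substitution that reduces the left-hand side to the right-hand side term by term, exploiting the defining property of $\ridge_{\rm global}$. The crucial fact, already recorded immediately before the statement, is that $\ridge_{\rm global}(D) = W[Y]$ depends \emph{only} on the unique label values appearing in $D = (X, Y)$, i.e. on the intended classes of the task, and not on the input samples, their embeddings, or label multiplicities. Because of this, the base learner evaluated on the support set can be rewritten in terms of the query labels, and the expectation over tasks collapses to an expectation over query sets.

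First I would fix an arbitrary task $(S, Q) \in \Tau$ and write $Q = (X, Y)$. Since $S$ and $Q$ are drawn from the same task distribution, they share the same set of intended classes, so the sorted unique values of the support labels coincide with those of $Y$; this gives $\ridge_{\rm global}(S) = W[Y] = \ridge_{\rm global}(Q)$, exactly the observation noted above the lemma. Unpacking the embedding convention $\embd(Q) = (\embd(X), Y)$, in which only inputs are mapped through $\embd$ while labels are preserved, and substituting into the task loss yields the per-task identity
\eqal{\label{eq:per_task}
    \mathcal{L}_{\rm ce}\big(\ridge_{\rm global}(S), \embd(Q)\big) = \mathcal{L}_{\rm ce}\big(W[Y], (\embd(X), Y)\big),
}
whose right-hand side is precisely the summand appearing in the claimed equality.

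Finally I would take the empirical average of \cref{eq:per_task} over the meta-training set. Both $\EE_{(S,Q)\in\Tau}$ and $\EE_{(X,Y)\in\Q}$ are averages over the same index set $\{1,\dots,T\}$ under the bijection $(S_t, Q_t)\leftrightarrow Q_t = (X_t, Y_t)$, so averaging the per-task identity immediately produces the stated result. There is no substantive obstacle: the entire content of the lemma is carried by the defining property of $\ridge_{\rm global}$ together with the shared-class assumption of the \fsl{} setup, and the only care required is the bookkeeping needed to confirm that the support and query sets of each task index the same rows $W[Y]$ of the global classifier.
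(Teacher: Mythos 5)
Your proposal is correct and follows exactly the reasoning the paper intends: the identity is an immediate consequence of the fact that $\ridge_{\rm global}(S)=W[Y]=\ridge_{\rm global}(Q)$ because support and query sets share the same class labels, combined with the term-by-term correspondence between the averages over $\Tau$ and over $\Q$. The paper offers no separate proof beyond the observations stated just before the lemma, so your write-up is simply a careful expansion of the same argument.
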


The left-hand side of \cref{eq:global_meta_loss} is thus the meta-learning loss associated with base learner $\ridge_{\rm global}(\cdot)$. Hence, the combination of \cref{thm:ce_conn} and \cref{thm:worse_than_optimal} implies that the empirical risk associated with this meta-learning loss is upper bounded by the standard cross-entropy loss, namely the right-hand side of \cref{eq:ce_bound}. The connection shows that feature pre-training also solves a meta-learning problem when we design the base learner to be $\ridge_{\rm global}(\cdot)$.

\begin{remark}
\label{thm:tight_bound}
The bound in \cref{thm:ce_conn} is tight when the standard cross-entropy loss is 0.
\end{remark}
\cref{thm:tight_bound} shows that all task classifiers $W[Y]$ and the embedding function $\embd$ are optimal, when standard cross-entropy is 0. In practice, \cref{thm:tight_bound} is achievable since over-parametrized neural networks could obtain zero empirical loss.

We highlight two important properties for the base learner $\ridge_{\rm global}(\cdot)$. Firstly, $\ridge_{\rm global}$ is not the base learner used during meta-testing since it cannot generalize to novel classes. In \cite{tian2020rethinking}, $\ridge_{\rm global}$ was simply replaced with Eq. \cref{eq:log_reg} during meta-testing, while other works choose to fine-tune the pre-trained embedding model $\embd$ with a new base learner intended for meta-testing using episodic training. The advantage of the fine-tuning strategy is clear since it optimizes $\embd$ for the actual base learner used during test time. This observation is well supported by existing empirical results, with many state-of-the-art methods~\citep[e.g.][]{ye2020few, zhang2020deepemd} adopting the fine-tuning strategy and surpassing standalone pre-training.

Secondly, we observe that $\ridge_{\rm global}(\cdot)$ describes a conditional meta-learning problem: the global labels $Y$ provide additional side information to facilitating model learning. Specifically, global labels directly reveal how task samples relate to one another and $\ridge_{\rm global}$ could simply map global labels to task classifiers via $W[Y]$. In contrast, unconditional base learners (e.g. \cref{eq:closed-form-solver,eq:log_reg}) have to learn classifiers based on support sets, without access to task relations provided by global labels.

Global labels offer significant advantages to learning the embedding model $\embd$. Denevi \textit{et al.} \cite{denevi2020advantage} proved that conditional meta-learning is advantageous over unconditional formulation by incurring a smaller excess risk, especially when the meta-distribution of tasks is organized into distant clusters (see \cite{denevi2020advantage} for further discussion). In practice, global labels cluster task samples for free and improve regularization by enforcing each cluster (denoted by global label $y$) to share vector $W[y]$ for all task classifiers. The above analysis explains why pre-training yields significantly more robust $\embd$ than many (unconditional) meta-learning methods.

\noindent\begin{minipage}{.46\textwidth}
\captionof{algorithm}{\textbf{\laml{}}} \label{alg:meta_learner}
\noindent\rule{6cm}{0.2pt}
\begin{algorithmic}
    \STATE \hspace{-1em}{\bfseries Input:} meta-training set $\Tau=\{\dtr_i, \dval_i\}_{i=1}^{T}$
    
    \STATE $\embd^0 = \argmin_{\embd} \EE_{(\dtr, \dval)\in\Tau} \left[\mathcal{L}(\ridge(\embd(\dtr), \embd(\dval))\right])$
    
    \STATE Global clusters $G = \textrm{LearnLabeler}(\embd^0, \Tau)$
    
    \STATE $\embd^* = \textrm{MetaLearn}(G, \Tau)$
    
    \STATE {\bfseries Return} $\embd^*$
\end{algorithmic}
\noindent\rule{6cm}{0.4pt}
\end{minipage}%
\hfill
\begin{minipage}{.49\textwidth}
  \centering
  \includegraphics[width=0.8\textwidth, trim={0 0 0 0.3cm},clip]{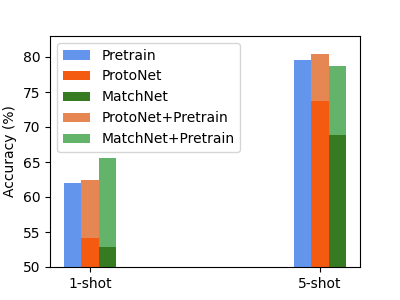}
  \captionsetup{font=footnotesize,labelfont=footnotesize}
  \captionof{figure}{Effect of Pre-training on \mimg{} }\label{fig:pretrain_proto_match}
\end{minipage}

In \cref{fig:pretrain_proto_match}, we plot several existing results from \cite{chen2018closer, wertheimer2021few} to highlight the contribution of the pre-training. For \mimg{}, pre-training accounted for on average 9\% improvement in ProtoNet and MatchNet. In addition, pre-training alone yields a performance mostly competitive with both ProtoNet and MatchNet, suggesting that the pre-training contributes far more towards generalization performance compared to the specific meta-learning strategies deployed. Similar trends are observed for other methods and datasets, both in the previous works and our experiments. The extensive empirical results are consistent with the theoretical advantages of conditional meta-learning.


\section{Meta Label Learning}
\label{sec:label_algo}
In \cref{sec:thm}, we analyzed the theoretical advantages of feature pre-training in connection to meta-learning, as well as the critical role of global labels in learning robust embedding model. However, we argue that leveraging global labels are problematic for meta-learning. Firstly, \textbf{global labels may be unavailable or inaccessible in practical applications}, when meta-training tasks are collected and annotated independently as discussed in \cref{sec:intro}, rendering pre-training inapplicable. Secondly, \textbf{global labels oversimplify meta-learning}: they directly reveal how input samples relate to one another across tasks, while the goal of meta-learning is precisely learning such cross-task relations and extracting transferable knowledge. As highlighted in \cref{sec:thm}, pre-training contributes more towards test performance than the meta-learning strategies employed, making it difficult to assess the relative merits of different strategies.

To address these issues, we present a novel meta-learning framework that does not require access to global labels. In particular, our framework automatically infers some notion of latent global labels across tasks, therefore bridging the experiment settings with and without global labels.

\cref{alg:meta_learner} outlines our proposed strategy. We first meta-learn an embedding function $\embd^0$ as a tool to measure sample similarity. Secondly, we introduce a labeling algorithm for clustering task samples while enforcing local task constraints in the training data. The resulting clusters are used as inferred global labels. Lastly, we may apply any existing meta-learning strategy capable of leveraging global labels to obtain the final model\footnote{
We note that \cite{hsu2018unsupervised} is a loosely similar strategy: they use a fully unsupervised feature representation for clustering to infer auxiliary labels, which are then used to refine the representation.}.

  
    
    
    
    

We focus on the labeling algorithm since other components of \laml{} are standard procedures. The labeling algorithm takes a meta-training set as input and outputs a set of clusters to represent global labels. The algorithm consists of a clustering step for updating centroids and a pruning step for merging small clusters. The algorithm is presented in  \cref{alg:self-label}.

\paragraph{Clustering Step} The procedure exploits local labels from each task to guide sample assignments. For each task, local labels are used to enforce two constraints: samples with the same local label should be assigned the same global label, while samples from different local classes should be assigned different global ones. Formally, given a set of cluster centroids $G=\{g_j\}_{j=1}^{J}$, we assign all samples $\{x_i\}_{i=1}^I$ sharing the same local label within a task to a single global cluster as follows, 
\eqal{
\label{eq:cluster_centroid}
    v^* = \argmin_{v} \nor{\frac{1}{I}\sum_{i=1}^I\embd^0(x_i) - g_v}^2.
}
We apply \cref{eq:cluster_centroid} to each class of samples in a task, matching $K$ clusters in total. For simplicity, we discard tasks in which multiple local classes map to the same cluster. Otherwise, we update each matched cluster $g_{v^*}$ and sample counts $N_{v^*}$ with
\eqal{
\label{eq:cluster_update}
    g_{v^*} = \frac{N_{v^*}g_{v^*}+ \sum_{i=1}^I(\embd^0(x_i))}{N_{v^*}+I}\quad\textrm{and}\quad N_{v^*} = N_{v^*} + I,
}
\paragraph{Pruning Step} We present a simple pruning strategy to regulate the number of clusters.  Under the mild assumption that each cluster is equally likely to appear in a task, a cluster $v$ is sampled with probability $p=\frac{K}{J}$ for each $K$-way classification task. The number of samples $N_v$ assigned to cluster $v$ thus follows a binomial distribution with $N_v \propto B(T, p)$ where we recall $T$ as the size of the meta-training set. We remove any cluster below the threshold
\eqal{
\label{eq:cluster_prune}
    N_v<\Bar{N_v} - q\sqrt{\textrm{Var}(N_v)}
}
where $\Bar{N_v}$ is the mean of $N_v$, $\textrm{Var}(N_v)$ the variance, and $q$ a hyper-parameter controlling the the aggressiveness of the pruning process.

\begin{algorithm}[t]
   \caption{LearnLabeler\label{alg:self-label}}
\begin{algorithmic}
    \STATE \hspace{-1em}{\bfseries Input:} embedding model $\embd^0$, meta-training set $\Tau=\{\dtr_t, \dval_t\}_{t=1}^{T}$, number of classes in a task $K$
    
    \STATE \hspace{-1em}{\bfseries Initialization:} sample tasks from $\Tau$ to initialize clusters $G=\{g_j\}_{j=1}^{J}$,
    \vspace{0.5em}
    
    \STATE {\bfseries While} $|G|$ has not converged:
    \STATE \quad $N_v = 1$ for each $g_v\in G$
    \STATE \quad \textbf{For} $(\dtr, \dval) \in \Tau$:
            \STATE \qquad Match global clusters $V=\{v_q\}_{q=1}^K$  via \cref{eq:cluster_centroid} for task $(X, Y) = \dtr \cup \dval$
            \STATE \qquad \textbf{If} $V$ has $K$ unique clusters
                \STATE \quad\qquad Update cluster $v$ for each $v\in V$ via \cref{eq:cluster_update}    
    \STATE \quad $G \leftarrow \{g_v|g_v\in G, N_v\geq\textrm{threshold in}$\cref{eq:cluster_prune}$\}$
\STATE \hspace{-1em}{\bfseries Return} $G$
\end{algorithmic}
\end{algorithm}

\cref{alg:self-label} initializes a large number of clusters and populates the centroids with mean class embeddings computed from random tasks in $\Tau$. For $J$ initial clusters, $\lceil \frac{J}{K}\rceil$ tasks are needed since each task contributes $K$ embeddings. The algorithm alternates between clustering and pruning to refine the centroids and estimate the number of clusters. When the number of existing clusters no longer changes, the algorithm terminates and returns the current centroids $G$.
Given the clusters $G$, all samples from the meta-training set can be assigned global labels.

We comment on two important points about \cref{alg:self-label}. Firstly, the two hyperparameters, initial cluster count and pruning threshold, are only necessary when global labels are unavailable, since they determine the appropriate number of clusters. In contrast, access to global labels implies that the number of clusters and even the number of samples for each cluster is known. This further shows how global labels could oversimplify meta-learning as discussed earlier. Secondly, \cref{alg:self-label} differs from the classical $K$-mean algorithm~\cite{lloyd1982least} by exploiting local information to guide the clustering process, while $K$-mean algorithm is fully unsupervised. We will show empirically that enforcing local constraints is necessary for learning robust models.

\paragraph{Meta-Learning with Inferred Labels} After obtaining the inferred labels, we may apply a wide range of meta-learning algorithms (such as \cite{ye2020few, wertheimer2021few} that exploit global labels) to obtain the final model. To highlight the efficacy of pre-training and the robustness of the proposed labeling algorithm, we choose \cite{tian2020rethinking} without additional fine-tuning in this work.


\section{Experiments}
\label{sec:exp}
We evaluate our proposed method on several benchmark datasets, including ImageNet variants, CIFAR variants, and a subset of MetaDataset~\cite{triantafillou2019meta}. As discussed earlier, we adopt independent annotation with local labels only for all experiments. Model details and hyperparameter values are included in the supplementary material. Due to space constraint, CIFAR experiments and some ablation studies are also presented in the supplementary material.

The experiments aim to address the following questions: \textbf{1)} How does \laml{} compare to existing algorithms? \textbf{2)} How does pre-training affect the performance of meta-learning algorithms? \textbf{3)} Does \laml{} learn meaningful clusters?
 

\paragraph{Experiments on ImageNet Variants}
We compare \laml{} to a representative set of meta-learning algorithms on \mimg{}~\cite{vinyals2016matching} and \timg{}~\cite{ren2018meta}. For completeness, we include methods requiring access to global labels. However, we emphasize that these methods are not directly comparable to \laml{}, since access to global labels provide significantly more information to meta-learners as discussed previously. These methods are intended to demonstrate the effect of pre-training on generalization performance. We also include self-supervised methods in the comparison.


\begin{table}[thb]
\caption{Classification accuracy of meta-learning models on \mimg{} and \timg{}.}
\begin{center}
\begin{tabular}{lcc|cc}
\toprule
  & \multicolumn{2}{c}{\mimg{}} & \multicolumn{2}{c}{\timg{}}\\
& $1$-shot & $5$-shot & $1$-shot & $5$-shot\\
\midrule
Global Labels &&&\\
\midrule
LEO \cite{rusu2018meta} & $61.7 \pm 0.7$ & $77.6  \pm 0.4$ & $66.3 \pm 0.7$ & $81.4 \pm 0.6$ \\
RFS \cite{tian2020rethinking} & $62.0 \pm 0.4$ & $79.6 \pm 0.3$  & $69.4 \pm 0.5$ & $84.4 \pm 0.3$\\
FEAT \cite{ye2020few} & $66.7 \pm 0.2$ & $82.0 \pm 0.1$ & $70.8 \pm 0.2$ & $84.8 \pm 0.2$\\
FRN \cite{wertheimer2021few} & $66.4\pm 0.2$ & $82.8 \pm 0.1$ & $71.2 \pm 0.2$ & $ 86.0\pm 0.2 $\\
\midrule
Local Labels &&&\\
\midrule
MAML \cite{finn2017model} & $48.7 \pm 1.8$ &  $63.1 \pm 0.9$ & $51.7 \pm 1.8$ &  $70.3 \pm 0.8$\\
ProtoNet \cite{snell2017prototypical} & $49.4 \pm 0.8$ & $68.2 \pm 0.7$ & $53.3 \pm 0.9$ & $72.7 \pm 0.7$\\
R2D2 \cite{bertinetto2018meta} & $51.9 \pm 0.2$ & $68.7 \pm 0.2$ & - & -\\
MetaOptNet \cite{lee2019meta} & $\mbf{62.6 \pm 0.6}$ & $78.6 \pm 0.5$ & $66.0 \pm 0.7$ & $81.5 \pm 0.6$\\
Shot-free \cite{ravichandran2019few} & $59.0 \pm {\rm n/a}$ & $77.6 \pm {\rm n/a}$ & $63.5 \pm {\rm n/a}$ & $82.6 \pm {\rm n/a}$\\
Initial Embedding (Eq. \cref{eq:meta-representation-model}) & $60.2 \pm 0.3$ & $75.6 \pm 0.5$ & $64.3\pm 0.5$ & $78.9 \pm 0.4$\\
\laml{} (ours) & $\mbf{62.0 \pm 0.4}$ & $\mbf{79.6 \pm 0.3}$  & $\mbf{69.1 \pm 0.5}$ & $\mbf{84.2 \pm 0.3}$\\
\midrule
No Labels (Self-Supervised) &&&&\\
\midrule
MoCo \cite{he2020momentum} (reported in \cite{tian2020rethinking}) & $54.2 \pm 0.9$ & $73.0 \pm 0.6$ & - & -\\
CMC \cite{tian2019contrastive} (reported in \cite{tian2020rethinking}) & $56.1 \pm 0.9$ & $73.9 \pm 0.7$ & - & -\\
\bottomrule
\end{tabular}
\end{center}
\label{tab:comp}
\end{table}


Similar to \cref{fig:pretrain_proto_match}, \cref{tab:comp} clearly shows the significant advantages of having access to global labels: all methods exploiting pre-training achieves noticeably higher generalization performance compared to methods without global labels. The results are consistent with our theoretical analysis that conditional meta-learning is more advantageous. Further, we observe that global labels not only enable pre-training but also flexible task sampling, including practical heuristics such as sampling more shots and more classes per task during meta-training~\citep{lee2019meta, ye2020few, wertheimer2021few}. All of the above contribute to generalization performance, and it is clear that global labels should be used when available.

In our experiment setting of ``local labels'' only, \laml{} outperforms all baselines in three out of four settings, and obtains performance comparable to \cite{lee2019meta} in the remaining one. We highlight the comparison between the initial embedding $\embd^0$ obtained via \cref{eq:meta-representation-model} and the final embedding $\embd^*$ obtained by \laml{} as they share identical experimental setups. It is thus easy to attribute the performance improvements to the proposed algorithm and the effect of pre-training. In particular, \laml{} improves the average test performance by over 2\% in \mimg{} and over 4\% in \timg{}. In addition, we observe that \laml{} obtains performance comparable to RFS~\cite{tian2020rethinking}, which is the oracle setting (i.e. access to ground truth global labels) for \laml{}.

While FEAT and FRN outperforms \laml{} in \cref{tab:comp}, we reiterate that methods exploiting global labels are not directly comparable, as discussed earlier. In addition, the performance of \laml{} could be readily improved via more sophisticated data augmentation or fine-tuning. We explore one such variant for \laml{} in \cref{sec:app_rotation}.

In \cite{tian2020rethinking}, the authors also adapted two state-of-the-art self-supervised methods MoCo~\cite{he2020momentum} and CMC \cite{tian2019contrastive} for \fsl{}. In the adapted methods, labels are not explicitly provided as input, but used to construct contrasting samples for learning self-supervised embedding. \laml{} also outperforms them in \cref{tab:comp}.

\paragraph{Robustness of the Labeling Algorithm}
A potentially trivial solution for clustering samples without any learning is by looking for identical samples: an identical sample appearing in multiple tasks would allow several local classes to be assigned to the same cluster. To demonstrate that \cref{alg:self-label} is not trivially matching identical samples across task, we introduce a more challenging experiment setting: each sample only appears once in the meta-training set. This implementation constructs the meta-training set by sampling from a flat dataset {\it without replacement}\footnote{For instance, \mimg{} (38400 training samples) can be split into 384 tasks of 100 samples in this setting.}. Consequently, \cref{alg:self-label} must only rely on the initial embedding function $\embd^0$ for estimating sample similarity. We evaluate \laml{} under this setting and report the results in \cref{tab:embd_0}.


\begin{table}[tb]
    \centering
    \caption{The effects of initial embedding function on the labeling algorithm}
    \begin{tabular}{l|cc|cc}
    \toprule
    Dataset & \multicolumn{2}{c}{\mimg{}} & \multicolumn{2}{c}{\timg{}}\\
    Replacement & Yes & No & Yes & No\\
    \midrule
        Percentage of Tasks Clustered (\%) & 100 & 98.6 & 99.9 & 89.5 \\
        Clustering Acc (\%) & 100 & 99.5 & 96.4 & 96.4\\
        1-shot Acc (\%) & $62.0 \pm 0.4$ & $61.8 \pm 0.5$ & $69.1 \pm 0.5$ & $69.1 \pm 0.5$\\
        5-shot Acc (\%) & $79.6 \pm 0.3$ & $79.4 \pm 0.4$ & $84.2 \pm 0.3$ & $84.1 \pm 0.3$\\
    \end{tabular}
    \label{tab:embd_0}
\end{table}
In \cref{tab:embd_0}, clustering accuracy is computed by assigning the most frequent ground truth label in each cluster as the desired target. In addition, percentage of tasks clustered refers to the tasks that map to $K$ unique clusters by \cref{alg:self-label}. The clustered tasks satisfy both constraints imposed by local labels and are used for pre-training.

The results suggest that \cref{alg:self-label} is robust in inferring accurate global labels, even when samples do not repeat across tasks. The no-replacement setting also has negligible impact on test performance. In particular, we note that \mimg{} is particularly challenging under the new setting, with only 384 tasks in the meta-training set. In contrast, the typical sample-with-replacement setting has access to unlimited number of tasks for training.

The high clustering accuracy implies that the meta-distribution underlying the meta-training set is near perfectly recovered. \laml{} thus effectively bridge the gap between experiment settings with and without access to global labels respectively. Using the inferred labels, we may apply a wide range of meta-learning algorithms to obtain the final model. We may also adopt flexible task sampling, such as sampling more shots and classes per task~\cite{lee2019meta, ye2020few, wertheimer2021few}, for better generalization performance.

\paragraph{The Importance of Local Constraints}
The clustering process enforces consistent assignment of task samples given their local labels. To understand the importance of enforcing these constraints, we consider an ablation study where \cref{alg:self-label} is replaced with standard $K$-mean algorithm, while other components of \laml{} remain unchanged. $K$-mean algorithm is fully unsupervised and ignores any local constraints. We initialize the $K$-mean algorithm with 64 clusters for \mimg{} and 351 clusters for \timg{}, the actual numbers of classes in respective datasets.

\begin{table}[t]
    \footnotesize
    \centering
    \caption{Comparison between \cref{alg:self-label} and $K$-mean Clustering}
    \begin{tabular}{l|ccc|ccc}
    \toprule
    & \multicolumn{3}{c}{\mimg{}} & \multicolumn{3}{c}{\timg{}}\\
    Cluster Alg. & Cluster Acc & 1-shot & 5-shot & Cluster Acc & 1-shot & 5-shot\\ 
    \midrule
    \cref{alg:self-label} (\laml{}) & 100 & $62.0\pm 0.4$ & $79.6\pm 0.3$ & 96.4 & $69.0\pm 0.5$ & $84.1\pm 0.3$\\
    $K$-mean & 84.9 & $60.7\pm 0.5$ & $76.9\pm 0.3$ & 28.2 & $64.8\pm 0.6$ & $78.8\pm 0.5$ \\
    \end{tabular}
    \label{tab:comp_kmean}
\end{table}

\cref{tab:comp_kmean} indicates that enforcing local constraints is critical in accurately inferring the global labels, as measured by clustering accuracy. In addition, lower clustering accuracy directly translates to lower test accuracy during meta-testing, suggesting that sensible task merging is an important prerequisite for feature pre-training. In particular, test accuracy drops by over 5\% for \timg{}, when $K$-mean algorithm ignores local task constraints.

Between the two local constraints, we note that \cref{eq:cluster_centroid} is more important for accurately inferring global labels. Specifically, the clustering step improves accuracy by averaging the votes from all samples sharing the same local label in a task. On the other hand, the constraint on matching $K$ unique clusters are satisfied by almost all tasks empirically (see \cref{tab:embd_0}).

\paragraph{Experiment on MetaDataset}
We further evaluate \laml{} on MetaDataset~\cite{triantafillou2019meta}, a collection of independently annotated datasets designed for meta-learning. MetaDataset presents a more challenging setting by including more diverse samples from different image domains.

We choose Aircraft, CUB and VGG flower datasets from MetaDataset for the experiment. The chosen datasets are all intended for fine-grained classification in aircraft models, bird species and flower species respectively. We compare \laml{} against FEAT and FRN, two state-of-the-art methods. Since only local labels are used, all models are trained without pre-training. For meta-training, all models are trained on tasks sampled from the three datasets. For meta-testing, we sample 1500 tasks from each dataset and report the average accuracy. Test accuracy on individual datasets are included in \cref{sec:app_metadata}.

\begin{table}[tb]
    \centering
    \caption{Meta-learning performance on MetaDataset subset (Aircraft, CUB and VGG)}
    \label{tab:comp_meta}
    \begin{tabular}{c|ccc}
        \toprule
        Algorithm & 1-shot & 5-shot \\
        \midrule
        FEAT (no pre-train) &  $60.9 \pm 0.7$ & $75.0 \pm 0.5$ \\
        FRN (no pre-train) & $63.1 \pm 0.7$ & $79.7 \pm 0.5$\\
        Initial Embedding Eq. \cref{eq:meta-representation-model} &  $61.9 \pm 0.5$ & $78.6 \pm 0.4$\\
        \laml{} & $\mbf{66.3 \pm 0.5}$ & $\mbf{82.4 \pm 0.3}$
    \end{tabular}
\end{table}

\cref{tab:comp_meta} shows that \laml{} outperforms both FEAT and FRN when pre-training becomes inapplicable in the ``local label'' setting. In particular, FEAT performed relatively poorly since it is not designed to train from random initialization. In addition, \laml{} improves upon the initial embedding by over 4\%, similar to the performance gain observed in \timg{}. The results further validate the efficacy of pre-training even when the labels are inferred, consistent with our theoretical analysis. Lastly, we report that the clustering accuracy in this experiment is ~$79.3$\%, with a $94.7$\ percent of the tasks clustered. The results suggest that \laml{}'s performance is robust to noise in the inferred labels.

\section{Discussion}
\label{sec:discussion}
In this paper, we studied the effect of pre-training and the critical role of global labels in meta-learning. We showed that pre-training closely relates to meta-learning as a loss upper bound, and induces a conditional meta-learning formulation that explains the improved empirical performance. The effect of pre-training is consistently demonstrated in existing results and our experiments. The connection between meta-learning and pre-training opens up new opportunities of directly applying existing techniques from supervised learning towards meta-learning. For instance, model distillation~\cite{tian2020rethinking} has been shown to further improve pre-training performance.

We also proposed a practical framework to infer global labels, for settings when they are unavailable. We demonstrate that \laml{} is robust and accurate in inferring global labels, and achieves generalization performance competitive with state-of-the-art methods. In the ablation studies, we observed that meta-learning methods implicitly learn to cluster samples across tasks, even when samples do not repeat. In addition, explicitly enforcing the local constraints is critical for accurately inferring global labels and learning robust few-shot models.

\paragraph{Limitations and Future Work}
We close by discussing some limitations of our work and directions of future research. In this paper, we focused on understanding the connection between pre-training and meta-learning, and evaluated \laml{} on benchmarks with globally disjoint classes. In the future, we intend to extend our method to settings where global labels are ill-defined, such as when classes are overlapping or hierarchical. One possible approach is to assign soft labels or multiple labels to samples, thus capturing more complex relationship between classes.



\bibliography{biblio}

\newpage 

\appendix

\crefname{assumption}{Assumption}{Assumptions}
\crefname{equation}{}{}
\Crefname{equation}{Eq.}{Eqs.}
\crefname{figure}{Figure}{Figures}
\crefname{table}{Table}{Tables}
\crefname{section}{Section}{Sections}
\crefname{theorem}{Theorem}{Theorems}
\crefname{proposition}{Proposition}{Propositions}
\crefname{fact}{Fact}{Facts}
\crefname{lemma}{Lemma}{Lemmas}
\crefname{corollary}{Corollary}{Corollaries}
\crefname{example}{Example}{Examples}
\crefname{remark}{Remark}{Remarks}
\crefname{algorithm}{Algorithm}{Algorithms}
\crefname{enumi}{}{}

\crefname{appendix}{Appendix}{Appendices}

\numberwithin{equation}{section}
\numberwithin{lemma}{section}
\numberwithin{proposition}{section}
\numberwithin{theorem}{section}
\numberwithin{corollary}{section}
\numberwithin{definition}{section}
\numberwithin{algorithm}{section}
\numberwithin{remark}{section}

\section*{\LARGE Supplementary Material: \paperTitle{}}
The supplementary material is organized as follows:
\begin{itemize}
    \item \cref{sec:app_proof} contains the proofs accompanying our theoretical analysis.
    \item \cref{sec:app_exp} presents additional experiment results.
    \item \cref{sec:app_model} details experimental setups, model architecture and hyperparameter values.
\end{itemize}

\section{Proofs}
\label{sec:app_proof}
Let $\Tau =\{(S_t, Q_t)\}_{t=1}^T$ be a meta-training set where all tasks are annotated with global labels. We denote the collection of query sets as $\Q=\{Q_t\}_{t=1}^T = \{(X_t, Y_t)\}_{t=1}^T$ where we write $Q_t=(X_t, Y_t)$ as a tuple of query input samples $X_t=\{x_{jt}\}_{j=1}^{\queryn}$ and their corresponding labels $Y_t=\{y_{jt}\}_{j=1}^{\queryn}$. For simplicity, we assume that the query sets are disjoint, namely $Q_t\cap Q_{t'} = \emptyset$ for any $t\ne t'$. We merge all query sets into a flat dataset $D(\Q)=\{(x_i, y_i)\}_{i=1}^N=\cup_{t=1}^T Q_t$, with $N = \queryn T$. 

\begin{proposition}\label{thm:app_ce_conn} 
With the notation and assumptions introduced above, let $C$ be the total number of classes in $D(\Q)$, and $W\in \R^{C \times m}$ the global classifier. Denote by $W[Y]$ the sub-matrix with rows indexed by the sorted unique values from $Y$. Then, for any embedding $\embd:\X\to\R^m$
\eqal{\label{eq:ce_bound_app}
    \EE_{(X, Y) \in \Q} \Big[\mathcal{L}_{\rm ce}\big(W[Y], (\embd(X), Y)\big)\Big] \leq ~\EE_{(x, y)\in D(\Q)}\left[\ell_{\rm ce}(W\embd(x), y)\right],
}
\end{proposition}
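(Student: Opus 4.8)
The plan is to reduce \cref{thm:app_ce_conn} to a pointwise comparison of cross-entropy losses and then match the two averages term by term. First I would write both sides explicitly. For a logit vector $z$ and a target label $y$, the cross-entropy loss is $\ell_{\rm ce}(z,y) = -\log\big(e^{z_y}/\sum_c e^{z_c}\big)$, where the normalizing sum ranges over the coordinates of $z$. On the right-hand side the logits are $z = W\embd(x) \in \R^C$, so the softmax normalizer runs over all $C$ global classes. On the left-hand side the classifier is the submatrix $W[Y]$, whose rows are indexed by the unique values appearing in $Y$; hence for a query sample $(x,y)$ with $y$ among those unique values, the true-class logit $(W[Y]\embd(x))_y = (W\embd(x))_y$ is unchanged, but the normalizer now sums $e^{(W\embd(x))_c}$ only over the classes $c$ present in that task.

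The key step is the pointwise inequality. Since every exponential $e^{(W\embd(x))_c}$ is strictly positive, restricting the sum to the classes appearing in $Y$ can only decrease the normalizer, so $\sum_{c \in \mathrm{unique}(Y)} e^{(W\embd(x))_c} \le \sum_{c=1}^C e^{(W\embd(x))_c}$. Because the true-class logit is identical in the two expressions, the softmax probability assigned to $y$ by the task classifier $W[Y]$ is no smaller than that assigned by the global classifier $W$; applying the decreasing map $-\log(\cdot)$ then gives $\ell_{\rm ce}(W[Y]\embd(x), y) \le \ell_{\rm ce}(W\embd(x), y)$ for every query sample. This is precisely the statement in the proof sketch that task classifiers estimate a label likelihood no smaller than the global classifier.

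It then remains to check that the two expectations are uniform averages over the same $N = \queryn T$ samples. On the left, $\EE_{(X,Y)\in\Q}$ averages over the $T$ tasks and, inside each $\mathcal{L}_{\rm ce}$, one averages over the $\queryn$ query samples of that task, assigning each sample weight $1/(\queryn T) = 1/N$. On the right, $\EE_{(x,y)\in D(\Q)}$ is the flat average over the $N$ samples of $D(\Q)$. Here I would invoke the disjointness assumption $Q_t \cap Q_{t'} = \emptyset$: it guarantees that $D(\Q)$ is a genuine disjoint union, so each query sample is counted exactly once and the nested average on the left coincides term by term with the flat average on the right. Summing the pointwise inequality over all $N$ samples with these equal weights yields \cref{eq:ce_bound_app}.

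I expect the only real subtlety to be bookkeeping rather than analysis: making the re-indexing of $W[Y]$ and its restricted softmax precise, and confirming that the per-task averaging on the left matches the flat averaging on the right (which relies on every task contributing the same number $\queryn$ of query points, together with disjointness). The analytic content — that shrinking a softmax normalizer to a subset of positive terms can only increase the probability of the retained true class — is elementary once the expressions are written out.

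\noindent
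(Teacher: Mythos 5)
Your proposal is correct and follows essentially the same route as the paper's proof: both rest on the observation that restricting the softmax normalizer to the positive terms indexed by $\pi(Q)$ (the classes present in the task) can only decrease the denominator while leaving the true-class logit unchanged, hence decreases the cross-entropy pointwise, and both then use the disjointness of the query sets together with $N = \queryn T$ to identify the flat average over $D(\Q)$ with the nested per-task average over $\Q$. The only cosmetic difference is order of operations — you establish the pointwise inequality first and then match the averages, whereas the paper first re-groups the flat sum by task and then applies the inequality inside each group — but the mathematical content is identical.
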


\begin{proof}
For a dataset $D$, let $\pi(D)$ be the set of class labels from $D$.
\eqal{
    \EE_{(x, y)\in D}\left[\ell_{\rm ce}(W\embd(x), y)\right] & = \frac{1}{N}\sum_{(x, y) \in D} -\log \frac{\exp(W[y]\embd(x))}{\sum_{y' \in \pi(D(\Q))}\exp(W[y']\embd(x))}\label{eq:ce_app}\\
    & =\frac{1}{N} \sum_{Q \in \Q}\sum_{(x, y) \in Q} -\log \frac{\exp(W[y]\embd(x))}{\sum_{y' \in \pi(D(\Q))}\exp(W[y']\embd(x))}\label{eq:app_multi_to_meta}
}
\cref{eq:app_multi_to_meta} rewrites the cross-entropy loss by enumerating over $\Q$. We observe that $\Q$ and $D(\Q)$ share the same collection of samples, since all query sets are disjoint.

\eqal{
    \cref{eq:app_multi_to_meta} & \ge  \frac{1}{T}\sum_{Q\in \Q}~\frac{1}{\queryn}\sum_{(x, y) \in Q} -\log \frac{\exp(W[y]\embd(x))}{\sum_{y'\in \pi(Q)} \exp(W[y']\embd(x))}\label{eq:app_ce_to_fsl}\\
    & = \frac{1}{T}\sum_{(X, Y)\in \Q} \Big[~\mathcal{L}_{\rm ce}(W[Y], (\embd(X), Y))~\Big]\\
    & = \EE_{(X, Y)\in \Q} \Big[~\mathcal{L}_{\rm ce}(W[Y], (\embd(X), Y))~\Big]\label{eq:app_rewrite_expectation}
}
In \cref{eq:app_ce_to_fsl}, the inequality is formed because the denominator $\sum_{y' \in \pi(D(\Q))}\exp(W[y']^\top \embd(x))$ is replaced with smaller values by summing over a smaller number of classes from $\pi(Q)$. Lastly, we rewrites the equation as the expectation over tasks in $\Q$.

Taking \cref{eq:ce_app} and \cref{eq:app_rewrite_expectation} yields
\eqal{
    \EE_{(X, Y) \in \Q} \Big[\mathcal{L}_{\rm ce}\big(W[Y], (\embd(X), Y)\big)\Big] \leq ~\EE_{(x, y)\in D(\Q)}\left[\ell_{\rm ce}(W\embd(x), y)\right]
}
\end{proof}



\begin{remark}
If $~\EE_{(x, y)\in D(\Q)}\left[\ell_{\rm ce}(W\embd(x), y)\right] = 0$,
\eqals{
\EE_{(X, Y) \in \Q} \Big[\mathcal{L}_{\rm ce}\big(W[Y], (\embd(X), Y)\big)\Big] = ~\EE_{(x, y)\in D(\Q)}\left[\ell_{\rm ce}(W\embd(x), y)\right] = 0.
}
\end{remark}
\begin{proof}
If $~\EE_{(x, y)\in D(\Q)}\left[\ell_{\rm ce}(W\embd(x), y)\right] = 0$, By \cref{thm:ce_conn},
\eqals{
\EE_{(X, Y) \in \Q} \Big[\mathcal{L}_{\rm ce}\big(W[Y], (\embd(X), Y)\big)\Big] \le 0
}
As cross-entropy loss $\mathcal{L}_{\rm ce}(\cdot)\ge 0$, we have $~\EE_{(X, Y) \in \Q} \Big[\mathcal{L}_{\rm ce}\big(W[Y], (\embd(X), Y)\big)\Big] = 0$
\end{proof}

\section{Additional Experiments}
\label{sec:app_exp}
\subsection{Impact of Pruning Threshold}
In \cref{alg:self-label}, the pruning threshold is controlled by the hyper-parameter $q$. We investigate how different $q$ values affect the number of clusters estimated by the labeling algorithm and the corresponding test accuracy on \mimg{} and \timg{}.

\begin{table}[thb]
    \caption{The effects of pruning threshold on test accuracy and the number of clusters.}
    \centering
    \begin{tabular}{cccc|cccc}
    \toprule
    \multicolumn{4}{c}{\mimg{} (64 classes)} & \multicolumn{4}{c}{\timg{} (351 classes)}\\
    \midrule
    $q$ & No. Clusters & 1-shot(\%) & 5-shot(\%) & $q$ & No. Clusters & 1-shot(\%) & 5-shot(\%)\\
     4.5 & 58 & $60.9\pm 0.5$ & $78.5\pm 0.4$ & 4 & 363 & $69.1 \pm 0.5$ & $84.2 \pm 0.3$\\
     5.5 & 58 & $60.9\pm 0.5$ & $78.5\pm 0.4$ & 4.5 & 427 & $68.5\pm 0.4$ & $83.6\pm 0.3$\\
     6.5 & 64 & $62.0 \pm 0.4$ & $79.6 \pm 0.3$ & 5.5 & 752 & $68.4\pm 0.4$ & $83.5\pm 0.3$\\
    \end{tabular}
     \label{tab:q_sens}
\end{table}

The results suggest that \laml{} is robust to a wide range of $q$ and obtains similar performance for different $q$ values. With appropriate $q$ values, the number of clusters estimated for the two datasets are very close to the actual number of global classes. For \mimg{}, the labeling algorithm could recover exactly 64 classes. While it is possible to replace $q$ with directly guessing the number of clusters in \cref{alg:self-label}, we note that tuning for $q$ is more convenient since appropriate $q$ values appear to concentrate within a much narrower range, compared to the possible numbers of clusters.

\subsection{Experiment on \laml{} Variant}
\label{sec:app_rotation}
\laml{} is compatible with different meta-learning algorithms. In this experiment, we demonstrate that we could further exploit the performance gains from pre-training by leveraging S2M2~\cite{mangla2020charting}, which combines additional data augmentation during pre-training and fine-tuning after obtaining the inferred labels. In particular, S2M2 introduces two additional augmentation techniques, including sample rotation and sample mix-up~\cite{zhang2018mixup}. The pre-trained model is fine-tuned with meta-training tasks to obtain embeddings more suitable for meta-testing.

We compared \laml{} (S2M2) with several recent meta-learning methods, including RFS~\cite{tian2020rethinking}, FEAT~\cite{ye2020few} and FRN~\cite{wertheimer2021few} in \cref{tab:mela_rotation}.

\begin{table}[hbt]
\caption{Classification accuracy of meta-learning models on \mimg{} and \timg{}.}
\begin{center}
\begin{tabular}{lcc|cc}
\toprule
  & \multicolumn{2}{c}{\mimg{}} & \multicolumn{2}{c}{\timg{}}\\
& $1$-shot & $5$-shot & $1$-shot & $5$-shot\\
\midrule
Global Labels &&&\\
\midrule
RFS \cite{tian2020rethinking} & $62.0 \pm 0.4$ & $79.6 \pm 0.3$  & $69.4 \pm 0.5$ & $84.4 \pm 0.3$\\
FEAT \cite{ye2020few} & $66.7 \pm 0.2$ & $82.0 \pm 0.1$ & $70.8 \pm 0.2$ & $84.8 \pm 0.2$\\
FRN \cite{wertheimer2021few} & $66.4\pm 0.2$ & $82.8 \pm 0.1$ & $71.2 \pm 0.2$ & $ 86.0\pm 0.2 $\\
\midrule
Local Labels &&&\\
\midrule
FRN (no pre-training) & $63.0\pm 0.2$ & $78.01 \pm 0.2$ & - & - \\
\laml{} (S2M2) & $65.8 \pm 0.4$ & $83.1 \pm 0.3$  & $70.6 \pm 0.5 $ & $85.9 \pm 0.3$\\
\bottomrule
\end{tabular}
\end{center}
\label{tab:mela_rotation}
\end{table}

Despite not having access to global labels, \laml{} (S2M2) is highly competitive with FEAT and FRN, two state-of-the-art models that exploits global labels. In addition, the proposed method outperforms RFS and FRN (no pre-training). The results further validate the efficacy of pre-training and the positive contribution from additional augmentation.

\subsection{Experiment on CIFAR Variants}
In this section we present additional experiments on CIFAR-FS and CIFAR-100 datasets.

The CIFAR-FS dataset~\cite{bertinetto2018meta} is derived from the original CIFAR-100 dataset by randomly splitting 100 classes into 64, 16 and 20 classes for training, validation, and testing, respectively. The FC100 dataset~\cite{oreshkin2018tadam} is also constructed from CIFAR-100 dataset with the classes split in a way similar to \timg{}. The exact splits used in our experiments are identical to \cite{tian2020rethinking}. We evaluate \laml{} on both CIFAR-FS and FC100 in 5-way-1-shot and 5-way-5-shot settings.

\begin{table}[thb]
    \centering
    \caption{Comparison on CIFAR-FS and FC100 benchmarks}
    \begin{tabular}{lcc|cc}
    \toprule
    & \multicolumn{4}{c}{Accuracy (\%)} \\
    & \multicolumn{2}{c}{CIFAR-FS} & \multicolumn{2}{c}{FC100}\\
    & $1$-shot & $5$-shot & $1$-shot & $5$-shot\\
        \midrule
        MAML \cite{finn2017model} & $ 58.9 \pm 1.9 $ &  $ 71.5 \pm 1.0 $ & - & - \\
        R2D2 \cite{bertinetto2018meta} & $65.3 \pm 0.2$ & $79.4 \pm 0.1$ & - & - \\
        TADAM \cite{oreshkin2018tadam} & - & - & $40.1 \pm 0.4$ & $56.1 \pm 0.4$ \\
        Shot-free \cite{ravichandran2019few} & $69.2 \pm {\rm n/a}$ & $84.7 \pm {\rm n/a}$ & - & - \\
        ProtoNet \cite{snell2017prototypical} & $72.2 \pm 0.7$ & $83.5 \pm 0.5$ & $37.5 \pm 0.6$ & $52.5 \pm 0.6$ \\
        MetaOptNet \cite{lee2019meta} & $\mbf{72.6 \pm 0.7}$ & $84.3 \pm 0.5$ & $41.1 \pm 0.6$ & $55.5 \pm 0.6$ \\
        \midrule
        \laml{} (Ours) & $71.4 \pm 0.5$ & $\mbf{85.6 \pm 0.4}$ & $\mbf{44.0 \pm 0.5}$ & $\mbf{59.5 \pm 0.5}$\\
        RFS \cite{tian2020rethinking} & $71.6 \pm 0.5$ & $\mbf{85.7 \pm 0.4}$ & $\mbf{44.4 \pm 0.5}$ & $\mbf{60.0 \pm 0.5}$\\
        \bottomrule
    \end{tabular}
    \label{tab:comp_cifar}
\end{table}

\cref{tab:comp_cifar} suggests that \laml{} obtains test performance comparable to RFS, which is the oracle setting. This further validates that global labels may not be necessary as input, and that our proposed labeling algorithm is effective in inferring meaningful global labels across tasks. In addition, \laml{} outperforms other meta-learning baselines in 3 out of 4 settings, and is only slightly worse than MetaOptNet in the remaining setting. While the high-dimensional embedding used by MetaOptNet (16000 dimensions vs 640 in ours) may be advantageous for some scenarios (e.g. CIFAR-FS 1-shot setting), they are potentially difficult to scale to larger tasks and pre-training still produces more robust embedding overall.

\subsection{Experiment on MetaDataset}
\label{sec:app_metadata}
We compare \laml{} against the initial embeddings learned via Eq. \cref{eq:meta-representation-model}, FEAT and FRN for fine-grained classification. Specifically, all models are trained on tasks sampled from from Aircraft, CUB and VGG flower. Since only local labels are used, all models are trained without pre-training. For meta-testing, we sample 1500 tasks from each constituent dataset and report the test accuracy for each dataset below.

\begin{table}[hbt]
    \caption{Test Accuracy on a subset of MetaDataset (Aircraft, CUB, VGG Flower). A single model is trained for each method over all tasks.}
    \label{tab:comp_meta_detail}
    \centering
    \begin{tabular}{c|cc|cc|cc}
    \toprule
    & \multicolumn{2}{c}{Aircraft} & \multicolumn{2}{c}{CUB} & \multicolumn{2}{c}{VGG Flower} \\
    & 1-shot & 5-shot & 1-shot & 5-shot & 1-shot & 5-shot\\
    \midrule
    FEAT & $61.7 \pm 0.6$ & $75.8 \pm 0.5$ & $59.6 \pm 0.6$ & $73.1 \pm 0.5$ & $62.9 \pm 0.6$ & $76.0 \pm 0.4$\\
    FRN & $60.7 \pm 0.7$ & $77.6 \pm 0.5$ & $\mbf{61.9 \pm  0.7}$ & $\mbf{77.7 \pm  0.5}$ & $65.2 \pm  0.6$ & $81.2 \pm 0.5$\\
    Eq. \cref{eq:meta-representation-model} & $67.7 \pm 0.6$ & $82.8 \pm 0.4$ & $53.8 \pm 0.5$ & $69.2 \pm 0.5$ & $65.4 \pm 0.5$ & $83.3 \pm 0.3$\\
    \laml{} & $\mbf{69.2 \pm 0.5}$ & $\mbf{84.3 \pm 0.4}$ & $61.0 \pm 0.5$ & $\mbf{77.2 \pm 0.4}$ & $\mbf{69.4 \pm 0.5}$ & $\mbf{86.0 \pm 0.3}$\\
    \end{tabular}
\end{table}

\section{Model and Experimental Setups}
\label{sec:app_model}
We provide additional details on the model architecture, experiment setups, and hyperparameter choices. We performed only limited model tuning, as it is not the focus on the work.

\subsection{Model Architecture}
We use a ResNet-12 architecture for all our experiments. The architecture strikes a good balance between model complexity and performance, and is one of the most commonly adopted architecture in existing works~\citep[e.g.][]{tian2020rethinking, ravichandran2019few, oreshkin2018tadam, lee2019meta}. Specifically, we adopt the default architecture from the official implementation\footnote{\url{https://github.com/WangYueFt/rfs}} of \cite{tian2020rethinking}. The model's penultimate layer is averaged and outputs an embedding $\embd(x)\in\R^{640}$.

\subsection{Experiment Setup}
The initial embedding function $\embd^0$ can be trained on either 1-shot or 5-shot setting with minimal impact on the quality of $\embd^*$. We choose the latter in our experiments. To ensure fair comparison, we follow the existing convention and use 15 samples per class for query sets $\dval$.

For all experiments, we adopt an initial learning rate of 0.05. The learning rate is decayed by a factor of 0.1 twice for all datasets. All models are trained using a SGD optimizer with a momentum of 0.9 and a weight decay of $5\times 10^{-4}$.

\cref{tab:app_hyper} reports hyperparameter values used in our experiments. Datasets CIFAR-FS and FC100 share the same values and are reported under ``CIFAR''.

\begin{table}[thb]
\caption{Hyperparameter values used in the experiments}
\begin{center}
\begin{small}
\begin{sc}
\begin{tabular}{llccc}
\toprule
 &  & \multicolumn{2}{c}{Values}\\
Symbol & Description & \mimg{} & \timg{} & CIFAR\\
\midrule
$\lambda_1$ in \cref{eq:closed-form-solver} & regularizer for ridge regression& $10^{-3}$ & $10^{-3}$ & $10^{-3}$\\
$\lambda_{2}$ in \cref{eq:log_reg} & regularizer for logistic regression & 1 & 1 & 1\\
$J$ & initial number of clusters & 300 & 3000 & 300\\
$q$ & pruning parameter in \cref{eq:cluster_prune} & 6.5 & 4.5 & 4.5\\
\end{tabular}
\end{sc}
\end{small}
\end{center}
\label{tab:app_hyper}
\end{table}

\subsection{Meta-Testing}
For all of our models, we use 
\eqal{
\label{eq:app_log_reg}
    \ridge_{\rm ce}(Z) = \argmin_{W}\Lagr_{\rm ce}(W, Z) + \lambda_2\nor{W}^2
}
as the base learner for meta-testing. \cref{eq:app_log_reg} is implemented by scikit-learn\footnote{\url{https://scikit-learn.org/stable/}} and identical to the one used in \cite{tian2020rethinking}. We observe empirically that this base learner outperforms other common choices such as ProtoNet~\cite{snell2017prototypical} or SVM~\cite{lee2019meta}.

\subsection{Computational Requirements}
All experiments are runnable on a commodity desktop PC with a single Nvidia 2080 Ti and 48GB of RAM. \laml{} takes about 3 hours to train for \mimg{} and about 6 hours to train for \timg{}. CIFAR-FS and FC100 both take about 1.5 hours for training.

The computational complexity of \laml{} is similar to other methods that exploits pre-training. In addition, our choice of applying ridge regression for learning the initial embedding function is computationally efficient and fast to train.
\end{document}